 \DeclareMathOperator*{\E}{\mathbb{E}}
 \DeclareMathOperator*{\argmax}{arg\,max}
 \newcommand{\cv}[1]{\underline{#1} \,}
 \newcommand{\XX}{\mathcal{X}}
 \newtheorem{theorem}{Theorem}
 \newtheorem{proof}{Proof}
 \newtheorem{proposition}{Proposition}
\begin{document}

\title{Statistical inference with probabilistic graphical models}
\author{Angélique~Drémeau\thanks{\' Ecole Normale Sup{\' e}rieure, France}%
, Christophe~Schülke\thanks{Universit\'e Paris Diderot, France}%
  , Yingying~Xu\thanks{Tokyo Institute of Technology, Japan}%
  , Devavrat~Shah\thanks{Massachusetts Institute of Technology, USA} }

\maketitle

\begin{center}
\emph{These are notes from the lecture of Devavrat Shah
given at the autumn school ``Statistical Physics, Optimization, Inference, and Message-Passing Algorithms'', 
that took place in Les Houches, France from Monday September 30th, 2013, 
till Friday October 11th, 2013. The school was organized by Florent Krzakala from UPMC \& ENS Paris, 
Federico Ricci-Tersenghi from La Sapienza Roma, Lenka Zdeborov\'a from CEA Saclay \& CNRS, 
and Riccardo Zecchina from Politecnico Torino.}
\end{center}

 \newpage
\tableofcontents

\newpage
\section{Introduction to Graphical Models}
\index{Graphical model}

\subsection{Inference}
\index{Inference}
Consider two random variables $A$ and $B$ with a joint probability distribution $P_{A,B}$. 
From the observation of the realization of one of those variables, say $B=b$, we want to infer the one that we did not observe.
To that end, we compute the conditional probability distribution $P_{A|B}$, and use it to obtain an estimate $\hat{a}(b)$ of $a$.

To quantify how good this estimate is, we introduce the \textbf{error probability}:
\begin{align}
 P_{error} &\triangleq P(A \neq  \hat{a}(b) | B = b) \\
	  &= 1 - P(A = \hat{a}(b) | B=b),  \notag
\end{align}
and we can see from the second equality that minimizing this error probability is equivalent to the following maximization problem, called maximum a posteriori (\textbf{MAP}) problem:
\begin{equation}
 \hat{a}(b) = \argmax_a P_{A|B}(a|b).
\end{equation}
\index{MAP}

The problem of computing $P_{A|B}(a|b)$ for all $a$ given $b$ is called the marginal (\textbf{MARG}) problem.
When the number of random variables increases, the MARG problem becomes difficult, because an exponential number of combinations has to be calculated.

\textbf{Fano's inequality} provides us an information-theoretical way of gaining insight into how much information about $a$ the knowledge of $b$ can give us:
\begin{equation}
 P_{error} \geq \frac{H(A|B) -1}{{\rm log}|A|},
 \label{Fano}
\end{equation}
with
\begin{align}
 H(A|B) &= \sum_b P_B(b) H(A|B=b), \notag \\
 H(A|B=b) &= \sum_a P_{A|B}(a|b) {\rm log} \left( \frac{1}{P_{A|B}(a|b)} \right). \notag
\end{align}
Fano's inequality formalises only a theoretical bound that does not tell us how to actually make an estimation. From a practical point of view, graphical models (GM) constitute here a powerful tool allowing us to write algorithms that solve inference problems.\index{Inference}

\subsection{Graphical models}
\index{Graphical model}

\subsubsection{Directed GMs}
\index{Directed}
Consider $N$ random variables $X_1 \cdots X_N$ on a discrete alphabet $\mathcal{X}$, and their joint probability distribution $P_{X_1 \cdots X_N}$.
We can always factorize this joint distribution in the following way:
\begin{equation}
 P_{X_1 \cdots X_N} = P_{X_1} P_{X_2 | X_1} \cdots P_{X_N | X_1 \cdots X_{N-1}}
 \label{eq:factorization1}
\end{equation}
and represent this factorized form by the following directed graphical model:
\index{Directed}
\begin{figure}[h]
 \begin{center}
  \includegraphics[width=0.8\textwidth]{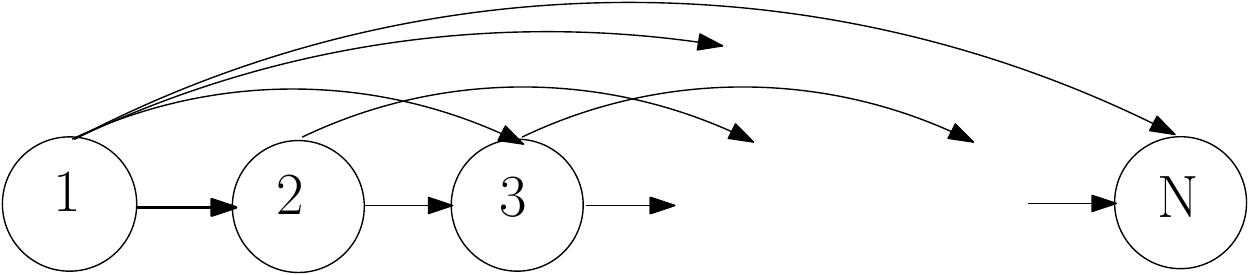}
 \end{center}
 \caption{A directed graphical model representing the factorized form (\ref{eq:factorization1}). }
\label{fig:1}
\end{figure}

In this graphical model, each node is affected to a random variable, and each directed edge represents a conditioning.
The way that we factorized the distribution, we obtain a complicated graphical model, in the sense that it has many edges.
A much simpler graphical model would be:
\begin{figure}[h]
 \begin{center}
  \includegraphics[width=0.8\textwidth]{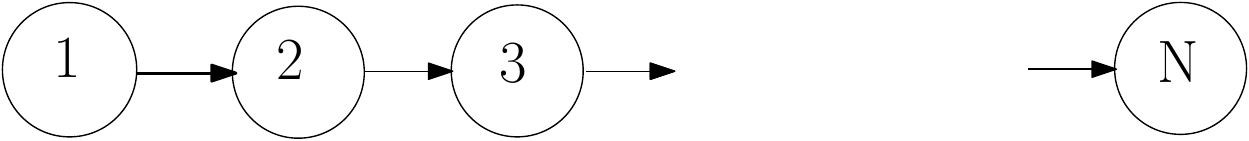}
 \end{center}
 \caption{A simpler graphical model representing the factorized form (\ref{eq:factorization2}). }
\label{fig:2}
\end{figure}

The latter graphical model corresponds to a factorization in which each of the probability distributions in the product is conditioned on only one variable:
\begin{equation}
 P_{X_1 \cdots X_N} = P_{X_1} P_{X_2 | X_1} \cdots P_{X_N | X_{N-1}} 
 \label{eq:factorization2}
\end{equation}

In the most general case, we can write a distribution represented by a directed graphical model in the factorized form:
\begin{equation}
  P_{X_1 \cdots X_N} = \prod_i P_{X_i|X_{\Pi_i}},
\end{equation}
where $X_{\Pi_i}$ is the set containing the parents of $X_i$ (the vertices from which an edge points to $i$).

\noindent
The following \textbf{notations} will hold for the rest of this chapter: 
\begin{itemize}
 \item random variables are capitalized: $X_i$,
 \item realizations of random variables are lower case: $x_i$,
 \item a set of random variables $\{X_1 \cdots X_N \}$ is noted $\cv{X}$,
 \item a set of realizations of $\cv{X}$ is noted $\cv{x}$,
 \item the subset of random variables of indices in $S$ is noted $X_S$.
\end{itemize}

\subsubsection{Undirected GMs}
\index{Undirected}
Another type of graphical model is the undirected graphical model. In that case, we define the graphical model not through the \textbf{factorization}, but through \textbf{independence}.\index{Graphical model}

Let:
\begin{align}
 &\mathcal{G}(\mathcal{V},\mathcal{E})   &\text{be an undirected graphical model, where}  \notag \\
 &\mathcal{V} = \{ 1 ,\cdots,N \} \quad &\text{is the set of vertices, and}  \notag \\  
 &\mathcal{E} \subseteq \mathcal{V} \times \mathcal{V} \quad &\text{is the set of edges.}\nonumber
\end{align}
Each vertex $i \in \mathcal{V}$ of this GM represents one random variables $X_i$, and each edge $(i,j) \in \mathcal{E}$ represents a conditional dependence.
As the GM is undirected, we have $(i,j) \equiv (j,i)$.

We define:
\begin{equation}
 N(i) \triangleq \{ j \in \mathcal{V} | (i,j)\in \mathcal{E} \} \quad \text{the set containing the neighbours of $i$.}
\end{equation}

\index{Undirected}
Undirected graphical model captures following dependence:\index{Graphical model}
\begin{equation}
 P_{X_i| X_{\mathcal{V} \backslash \{ i \} }} \equiv P_{X_i | X_{N(i)}},
\end{equation}
meaning that only variables connected by edges have a conditional dependence.

Let $A \subset \mathcal{V}$, $B \subset \mathcal{V}$, $C \subset \mathcal{V}$. 
We write that $X_A \perp X_B \mid X_C$ if $A$ and $B$ are disjoint and if all pathes leading from one element of $A$ to one element of $B$ 
lead over an element of $C$, as is illustrated in Fig. \ref{fig:3}. In other words, if we remove $C$, then $A$ and $B$ are unconnected (Fig. \ref{fig:4}).
\begin{figure}[h]
\begin{center}
 \includegraphics[width=0.4\textwidth]{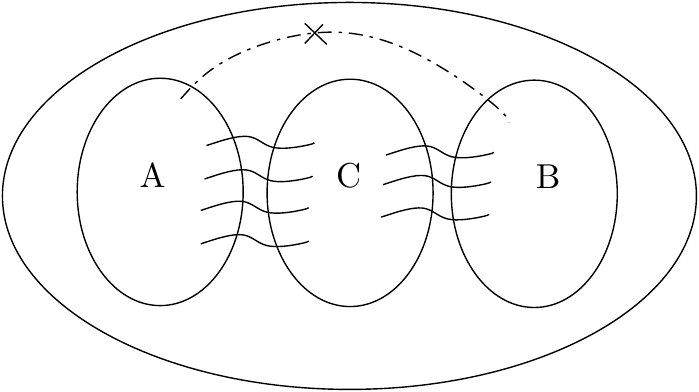}
\end{center}
\caption{Schematic view of a graphical model in which $X_A \perp X_B \mid X_C$. All paths leading from $A$ to $B$ go through $C$.}
 \label{fig:3}
\end{figure}
\begin{figure}[h]
\begin{center}
 \includegraphics[width=1\textwidth]{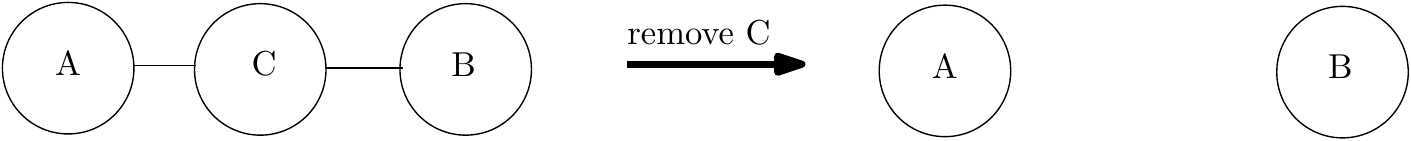}
\end{center}
\caption{Simple view showing the independence of $A$ and $B$ conditioned on $C$.}
 \label{fig:4}
\end{figure}

Undirected GMs are also called \textbf{Markov random fields} (MRF).
\index{Undirected}

\subsubsection{Cliques}(\textbf{Definition})
A clique is a subgraph of a graph in which all possible pairs of vertices are linked by an edge. A maximal clique
is a clique that is contained by no other clique.
\begin{figure}[h]
\begin{center}
 \includegraphics[width=0.4\textwidth]{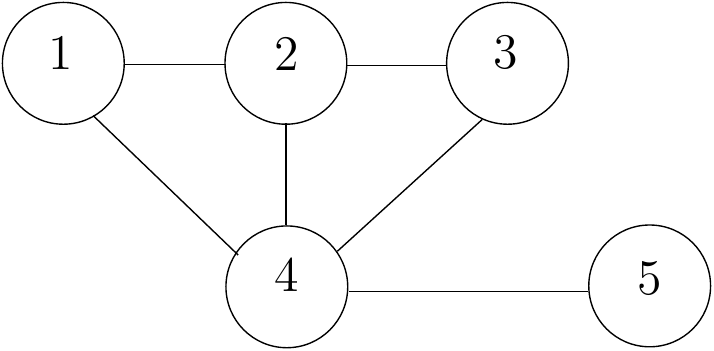}
\end{center}
\caption{In this graphical model, the maximal cliques are $\{1,2,4\}$, $\{2,3,4\}$ and $\{4,5\}$. }
 \label{fig:5}
\end{figure}

\begin{theorem} (\cite{HC1971})
Given a MRF $\mathcal{G}$ and a probability distribution $P_{\cv{X}}(\cv{x}) > 0$. Then:
\begin{equation}
 P_{\cv{X}} ( \cv{x} ) \propto \prod_{C \in \mathcal{C}} \phi_C(x_C)
\end{equation}
where $\mathcal{C}$ is the set of cliques of $\mathcal{G}$.
\end{theorem}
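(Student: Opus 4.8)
The statement is the Hammersley--Clifford theorem, and the natural plan is to \emph{construct} the clique potentials $\phi_C$ explicitly by an inclusion--exclusion (Möbius-inversion) formula, and then to use the Markov property of $\mathcal{G}$ to show that every factor which is not supported on a clique is identically $1$. The strict positivity hypothesis $P_{\cv{X}}(\cv{x}) > 0$ will be used repeatedly: it guarantees that all the conditional probabilities appearing below are well defined and nonzero, so that logarithms and ratios are legitimate.

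First I would fix, once and for all, an arbitrary reference configuration $\bar{\cv{x}} \in \XX^N$, and set $Q(\cv{x}) \triangleq \log\!\big(P_{\cv{X}}(\cv{x})/P_{\cv{X}}(\bar{\cv{x}})\big)$. For $T \subseteq \mathcal{V}$ write $(x_T, \bar{x}_{\mathcal{V}\setminus T})$ for the configuration whose coordinates indexed by $T$ are read off from $\cv{x}$ and whose remaining coordinates take the reference values. For every subset $S \subseteq \mathcal{V}$, define
\[
 \Psi_S(\cv{x}) \;\triangleq\; \sum_{T \subseteq S} (-1)^{|S \setminus T|}\, Q\big(x_T, \bar{x}_{\mathcal{V}\setminus T}\big).
\]
Since $T \mapsto Q(x_T, \bar{x}_{\mathcal{V}\setminus T})$ and $\Psi$ are related by the Möbius-inversion formula on the Boolean lattice of subsets, one gets $Q(x_S, \bar{x}_{\mathcal{V}\setminus S}) = \sum_{T \subseteq S} \Psi_T(\cv{x})$ for every $S$; taking $S = \mathcal{V}$ yields $Q(\cv{x}) = \sum_{T \subseteq \mathcal{V}} \Psi_T(\cv{x})$. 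By construction $\Psi_S$ depends on $\cv{x}$ only through $x_S$ (and in fact vanishes as soon as some coordinate of $x_S$ equals its reference value, by the same pairing trick used below). Exponentiating, $P_{\cv{X}}(\cv{x}) = P_{\cv{X}}(\bar{\cv{x}})\prod_{S \subseteq \mathcal{V}} e^{\Psi_S(\cv{x})}$, so everything reduces to proving that $\Psi_S \equiv 0$ whenever $S$ is \emph{not} a clique of $\mathcal{G}$; granting this, set $\phi_C \triangleq e^{\Psi_C}$ for each clique $C$, absorb the constant $P_{\cv{X}}(\bar{\cv{x}})$ into the proportionality sign, and conclude $P_{\cv{X}}(\cv{x}) \propto \prod_{C \in \mathcal{C}} \phi_C(x_C)$.

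The heart of the argument---and the step I expect to be the main obstacle---is this vanishing claim. Suppose $S$ is not a clique, so it contains two vertices $i \neq j$ with $(i,j) \notin \mathcal{E}$. Writing $Q_T \triangleq Q(x_T, \bar{x}_{\mathcal{V}\setminus T})$, I would first group the subsets $T \subseteq S$ in pairs according to whether they contain $i$; this rewrites $\Psi_S(\cv{x})$ as an alternating sum, over $U \subseteq S \setminus \{i\}$, of the differences $Q_{U \cup \{i\}} - Q_U$. Each such difference is a conditional log-ratio $\log\big(P(X_i = x_i \mid X_{\mathcal{V}\setminus\{i\}} = z_U)/P(X_i = \bar{x}_i \mid X_{\mathcal{V}\setminus\{i\}} = z_U)\big)$, where $z_U$ is the configuration on $\mathcal{V}\setminus\{i\}$ agreeing with $\cv{x}$ on $U$ and with $\bar{\cv{x}}$ elsewhere, the joint-probability denominators cancelling thanks to positivity. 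I would then group \emph{these} terms in pairs according to whether $U$ contains $j$: the two conditioning configurations within such a $j$-pair differ only in coordinate $j \notin N(i)$, so the defining relation $P_{X_i \mid X_{\mathcal{V}\setminus\{i\}}} \equiv P_{X_i \mid X_{N(i)}}$ of the MRF forces the two conditional laws of $X_i$, hence the two log-ratios, to coincide; the members of each $j$-pair therefore cancel and $\Psi_S(\cv{x}) = 0$. The only genuinely delicate bookkeeping is keeping track of the signs through the two pairings and of exactly which coordinates each intermediate conditioning configuration fixes; once that is in hand, collecting the surviving clique terms and exponentiating completes the proof. (The reverse implication---that a distribution of the product form is Markov with respect to $\mathcal{G}$---is the easy direction and is not what is being asserted here.)
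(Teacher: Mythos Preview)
Your proof is correct and follows essentially the same route as the paper's: the M\"obius-inversion construction of the potentials followed by the pairing/cancellation argument for non-clique subsets using the local Markov property (this is Grimmett's proof). The paper presents it only for the binary alphabet $\mathcal{X}=\{0,1\}$ with reference configuration $\bar{\cv{x}}=\mathbb{0}$, whereas your write-up works for a general finite alphabet and an arbitrary reference point; but the structure of the argument---including the four-term cancellation $G(B)-G(B+i)-G(B+j)+G(B+i+j)=0$ (which you reach via two successive pairings over $i$ and then $j$) and its reduction to a conditional log-ratio independent of $x_j$---is identical.
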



\begin{proof}(\cite{Gri1973}) for $\mathcal{X} = \{ {\rm 0} , {\rm 1} \}$.\\
We will show the following, equivalent formulation:
\begin{equation}
  P_{\cv{X}} ( \cv{x} ) \propto e^{\sum_{C \in \mathcal{C}} V_C(x_C) }
\end{equation}
by exhibiting the solution:
\begin{equation}
 V_C(x_C) =
 \begin{cases}
  Q(C) & \text{if } x_C = \mathbb{1}_C , \\ 
  0 & \text{otherwise,}
 \end{cases}
\end{equation}
with
\begin{align}
 Q(C) &= \sum_{A \subseteq C} (-1)^{|C-A|} \underbrace{\ln P_{\cv{X}}\left( x_A = \mathbb{1}_A , x_{V \backslash A} = \mathbb{0} \right)}_{\triangleq G(A)}.
\end{align}

Suppose we have an assignement $\cv{X} \mapsto N(\cv{X}) = \{ i | x_i=1 \}$. We want to prove that:
\begin{align}
 G(N(\cv{X})) &\triangleq \ln P_{\cv{X}}(\cv{x}),  \notag \\
	      &= \sum_{C \in \mathcal{C}} V_C(x_C),  \notag \\
	      &= \sum_{C \subseteq N(\cv{x})} Q(C).
\end{align}

This is equivalent to proving the two claims:
\begin{align}
 &{\rm C1:} \quad \forall S \subset \mathcal{C}, \quad G(S) = \sum_{A \subseteq S} Q(A)  \notag \\
 &{\rm C2:} \quad \text{if $A$ is not a clique, } \quad Q(A) = 0 \notag
\end{align}

Let us begin by proving C1:
\begin{align}
 \sum_{A \subseteq S} Q(A) &= \sum_{A \subseteq S} \sum_{B \subseteq A} (-1)^{|A-B|} G(B) \notag \\
 &= \sum_{B\subseteq S} G(B) \left( \sum_{B \subseteq A \subseteq S} (-1)^{|A-B|} \right) 
\end{align}
where we note that the term in brackets is zero except when $B=S$, because we can rewrite it as
\begin{align}
 \sum_{0 \leq l \leq k} (-1)^l \binom{l}{k} = (-1 + 1)^k = 0.
\end{align}
Therefore $\quad G(S) = \sum_{A \subseteq S} Q(A)$.
\vspace{0.5cm}

For C2, suppose that $A$ is not a clique, which allows us to choose  $(i,j) \in A$ with $(i,j) \notin \mathcal{E}$. Then
\begin{align} 
 Q(A) &= \sum_{B \subseteq A \backslash \{ i,j \} } (-1)^{|A-B|} \left[ G(B) - G(B+i) + G(B+i+j) - G(B+j) \right]. \notag
\end{align}
Let us show that the term in brackets is zero by showing
\begin{align}
 G(B+i+j)-G(B+j) &= G(B+i) - G(B)  \notag
\end{align}
or equivalently
\begin{align}
 \textstyle \ln \frac{P_X \left( x_B=\mathbb{1}_B , x_i=1, x_j=1, x_{\mathcal{V} \backslash \{ i,j,B \}} = \mathbb{0} \right) }{P_X \left( x_B=\mathbb{1}_B , x_i=0, x_j=1, x_{\mathcal{V}\backslash \{ i,j,B\}} = \mathbb{0} \right) } &= \textstyle \ln \frac{P_X \left( x_B=\mathbb{1}_B, x_i=1, x_j=0, x_{\mathcal{V}\backslash \{ i,j,B\} } = \mathbb{0} \right)}{P_X \left( x_B=\mathbb{1}_B , x_i=0, x_j=0, x_{\mathcal{V} \backslash \{ i,j,B\}} = \mathbb{0} \right)} \notag, 
\end{align}
where $\mathcal{V}\backslash \{ i,j,B \}$ stands for the set of all vertices except $i$, $j$ and those in $B$.\\
We see that the only difference between the left-hand side and the right-hand side is the value taken by $x_j$. Using Bayes' rule, we can rewrite both the right-hand side and the left-hand side under the form
\begin{align}
 \textstyle \ln \frac{P_X \left( X_i = 1 | X_j = \pm 1, X_B=\mathbb{1}_B, X_{\mathcal{V} \backslash \{ i,j,B \}} = \mathbb{0} \right) }{P_X \left(X_i = 0 | X_j = \pm 1, X_B=\mathbb{1}_B, X_{\mathcal{V} \backslash \{ i,j,B \}} = \mathbb{0} \right)) }. \notag
\end{align}
As $(i,j) \notin \mathcal{E}$, the conditional probabilities on $X_i$ do not depend on the value taken by $X_j$,  
and therefore the right-hand side equals the left-hand side, $Q(A) = 0$ and C2 is proved.
\end{proof}

\subsection{Factor graphs}
Thanks to the Hammersley-Clifford theorem, we know that we can write a probability distribution corresponding to
a MRF $\mathcal{G}$ in the following way
\begin{align}
 P_{\cv{X}}(\cv{x} ) \propto \prod_{C \in \mathcal{C}^*} \phi_C (x_C)\label{eq:fg}
\end{align}
where $\mathcal{C}^*$ is the set of maximal cliques of $G$.
In a general definition, we can also write
\begin{align}
  P_{\cv{X}}(\cv{x} ) \propto \prod_{F \in \mathcal{F}} \phi_F (x_F)
\end{align}
where the partition $\mathcal{F} \subseteq 2^\mathcal{V}$ has nothing to do with any underlying graph.

In what follows, we give two examples in which introducing factor graphs is a natural approach to an inference problem. \index{Inference}

\subsubsection{Image processing}
We consider an image with binary pixels ($\mathcal{X} = \{-\text{1},\text{1} \}$), and a probability distribution:
\begin{equation}
 p(\cv{x})  \propto e^{\sum_{i \in V} \theta_i x_i + \sum_{(i,j) \in E} \theta_{ij} x_i x_j}
\end{equation}
\begin{figure}[h]
 \begin{center}
  \includegraphics[width=0.6\textwidth]{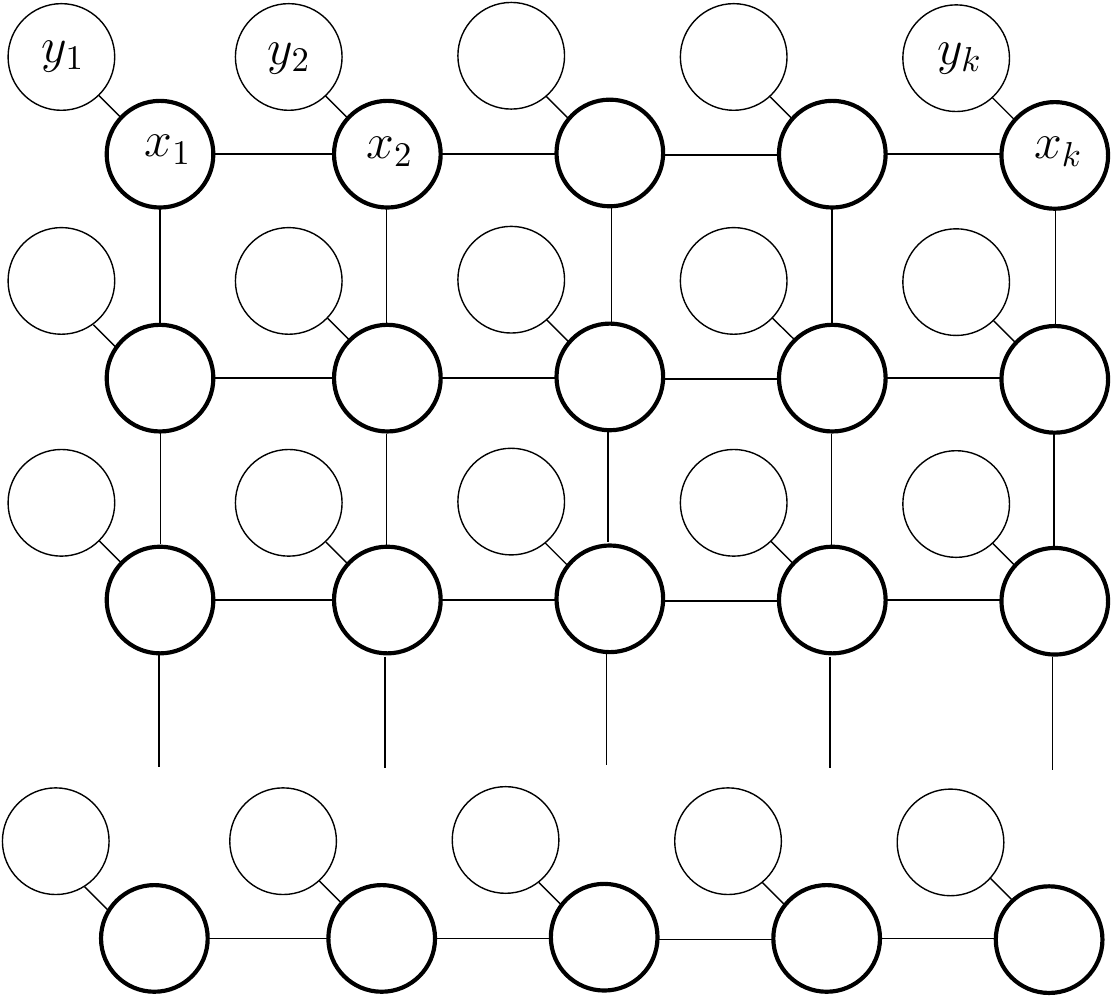}
 \end{center}
 \caption{Graphical model representing a 2D image. The fat circles correspond to the pixels of the image $x_k$, and each one is 
 linked to a noisy measurement $y_k$. Adjacent pixels are linked by edges that allow modelling the assumed smoothness of the image.}
\label{fig:6}
\end{figure}

For each pixel $x_k$, we record a noisy version $y_k$. We consider natural images, in which big jumps in intensity between 
two neighbouring pixels are unlikely. This can be modelled with:
\begin{align}
 &a \sum_{i} x_i y_i + b \sum_{(i,j) \in \mathcal{E}} x_i x_j
\end{align}

This way, the first term pushes $x_k$ to match the measured value $y_k$, while the second term favours piecewise constant images. 
We can identify $\theta_i \equiv a y_i$ and $ \theta_{ij} \equiv b$.

\subsubsection{Crowd-sourcing}
Crowd-sourcing is used for tasks that are easy for humans but difficult for machines, and that are as hard to verify as to evaluate.
Crowd-sourcing then consists in assigning to each of $M$ human ``workers" a subset of $N$ tasks to evaluate, and to collect their answers $A$.
Each worker has a different error probability $p_i \in \{ \frac{1}{2} , 1 \}$: either he gives random answers, or he is fully reliable . The goal is to 
infer both the correct values of each task, $t_j$, and the $p_i$ of each worker. The factor graph corresponding to that problem is represented in Fig \ref{fig:7}.
\begin{figure}[h]
 \begin{center}
  \includegraphics[width=0.5\textwidth]{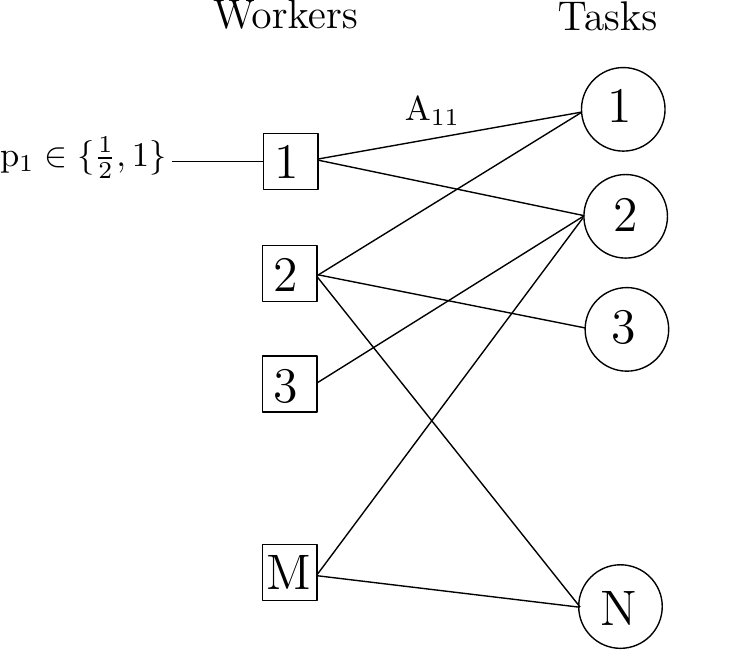}
 \end{center}
 \caption{Graphical model illustrating crowd-sourcing. Each worker $i$ is assigned a subset of the tasks for evaluation, and for each
 of those tasks $a$, his answer ${\rm A}_{ia}$ is collected.}
\label{fig:7}
\end{figure}

The conditional probability distribution of $\cv{t}$ and $\cv{p}$ knowing the answers $A$ reads
\begin{align}
 P_{\cv{t},\cv{p} | \cv{A} } &\propto P_{\cv{A} | \cv{t},\cv{p} } P_{\cv{t}, \cv{p}}  \notag \\
 &\propto P_{\cv{A} | \cv{t},\cv{p} }
\end{align}
where we assumed a uniform distribution on the joint probability $P_{\cv{t}, \cv{p}}$.
Then
\begin{align}
 P_{\cv{A} | \cv{t}, \cv{p}} &= \prod_e P_{A_e | t_e, p_e}
\end{align}
with
\begin{align}
 P_{A_e | t_e, p_e} = \left( \left( \frac{p_e}{1-p_e} \right)^{A_e t_e} (1-p_e) p_e \right)^{\frac{1}{2}}.
\end{align}

\index{MAP}\index{MARG}
\subsection{MAP and MARG}
\emph{MAP}.
The MAP problem consists in solving:
\begin{align}
 {\rm max}_{\cv{x} \in \{ 0, 1 \}^N} \sum_i \theta_i x_i+ \sum_{(i,j) \in \mathcal{E}} \theta_{ij} x_i x_j.
\end{align}
When $\theta_{ij} \rightarrow - \infty$, neighbouring nodes can not be in the same state anymore. 
This is the hard-core model, which is very hard to solve.\\

\noindent
\emph{MARG}.
The MARG focuses on the evaluation of marginal probabilities, depending on only one random variable, for instance:
\begin{equation}
 P_{X_1} (0) = \frac{Z(X_1 = 0)}{Z}
\end{equation}
as well as conditional marginal probabilities:
\begin{align}
 P_{X_2 | X_1} ( X_2=0 | X_1=0) &= \frac{Z(X_1=0,X_2=0)}{Z(X_1=0)} \\
 P_{X_N | X_1 \cdots X_{N(1)} } ( X_N=0 | X_1 \cdots X_{N-1} =0) &= \frac{Z(\text{all } 0)}{Z(\text{all but } X_N \text{ are } 0)}
\end{align}
\begin{align}
 P_{X_1}(0) \times \cdots \times P_{X_N | X_1 \cdots X_{N-1}}(0) = \frac{1}{Z}
\end{align}

Both of these problems are computationally hard. 
Can we design efficient algorithms to solve them?

\section{Inference Algorithms: Elimination, Junction Tree and Belief Propagation}
\index{Junction tree}
\index{Inference}
\index{Belief propagation}

\index{Hardness}
\index{MAP}\index{MARG}
In the MAP and MARG problems described previously, the hardness comes from the fact that with growing instance size, 
the number of combinations of variables over which to maximize or marginalize becomes quickly intractable.
But when dealing with GMs, one can exploit the structure of the GM in order to reduce the number of combinations that have to
be taken into account. Intuitively, the smaller the connectivity of the variables in the GM is, the smaller this number of 
combination becomes. We will formalize this by introducing the elimination algorithm, that gives us a systematic way of 
making fewer maximizations/marginalizations on a given graph. We will see how substantially 
the number of operations is reduced on a graph that is not completely connected.

\subsection{The elimination algorithm}

We consider the GM in Fig. \ref{fig:graph1} which is not fully connected. The colored subgraphs represent the maximal cliques.
\begin{figure}[!h]
\begin{center}
	\includegraphics[width=0.30\columnwidth]{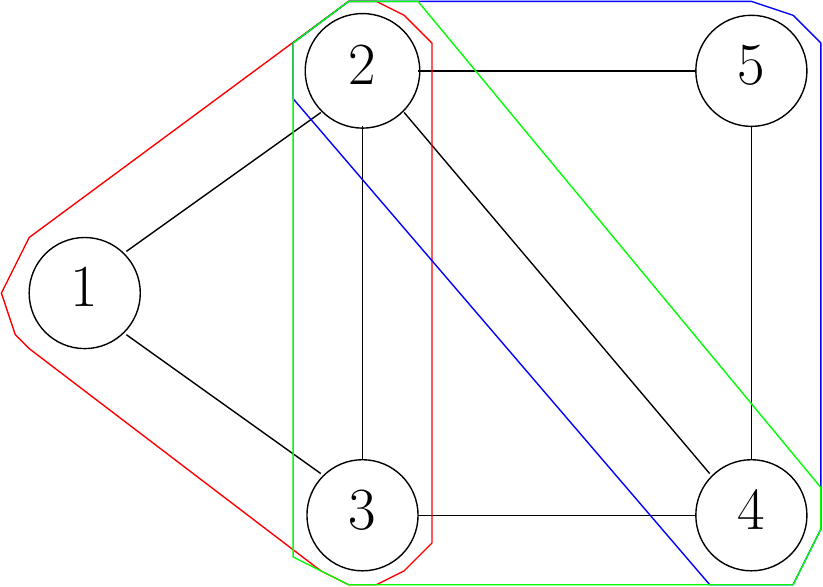}
\end{center}
\caption{A GM and its maximal cliques.}
\label{fig:graph1}
\end{figure}

\noindent
Using decomposition \eqref{eq:fg}, we can write
\begin{align}
P_{\cv{X}} ( \cv{x})\propto \phi_{123}(x_1,x_2,x_3).\;\phi_{234}(x_2,x_3,x_4).\;\phi_{245}(x_2,x_4,x_5).
\label{factorized}
\end{align}
We want to solve the MARG problem on this GM, for example 
for calculating the marginal probability of $x_1$:
\begin{align}
 P_{X_1}(x_1) &= \sum_{x_2,x_3,x_4,x_5} P_{\cv{X}}(\cv{x}).
\end{align}
A priori, this requires to evaluate $|\XX|^4$ terms, each of them taking $|\XX|$ different values. In the end, 
$3 |\XX| |\XX|^4$ operations are needed for calculating this marginal naively. But if we take advantage of 
the factorized form (\ref{factorized}), we can eliminate some of the variables. The elimination process goes along 
these lines:
\begin{align}
P_{X_1} (x_1)
&\propto \sum_{x_2,x_3,x_4,x_5} \phi_{123}(x_1,x_2,x_3).\;\phi_{234}(x_2,x_3,x_4).\;\phi_{245}(x_2,x_4,x_5),\label{eq:beginfer}\\
&\propto \sum_{x_2,x_3,x_4} \phi_{123}(x_1,x_2,x_3).\;\phi_{234}(x_2,x_3,x_4).\;\sum_{x_5}\phi_{245}(x_2,x_4,x_5),\\
&\propto \sum_{x_2,x_3,x_4} \phi_{123}(x_1,x_2,x_3).\;\phi_{234}(x_2,x_3,x_4).\;m_5(x_2,x_4),\\
&\propto \sum_{x_2,x_3} \phi_{123}(x_1,x_2,x_3).\;\sum_{x_4}\phi_{234}(x_2,x_3,x_4).\;m_5(x_2,x_4),\\
&\propto \sum_{x_2,x_3} \phi_{123}(x_1,x_2,x_3).\;m_4(x_2,x_3),\\
&\propto \sum_{x_2}\left (\sum_{x_3} \phi_{123}(x_1,x_2,x_3)\;m_4(x_2,x_3)\right),\\
&\propto \sum_{x_2}m_3(x_1,x_2),\\
&\propto m_2(x_1).\label{eq:endinfer}
\end{align}
With this elimination process made, the number of operations necessary to compute the marginal scales as $|\XX|^3$ instead of $|\XX|^5$,
thereby greatly reducing the complexity of the problem by using the structure of the GM.
Similarly, we can rewrite the MAP problem as follows\index{MAP}
\begin{align}
&\max_{x_1,x_2,x_3,x_4,x_5}\phi_{123}(x_1,x_2,x_3).\;\phi_{234}(x_2,x_3,x_4).\;\phi_{245}(x_2,x_4,x_5),\\
&= \max_{x_1,x_2,x_3,x_4} \phi_{123}(x_1,x_2,x_3).\;\phi_{234}(x_2,x_3,x_4).\;\max_{x_5}\phi_{245}(x_2,x_4,x_5),\\
&= \max_{x_1,x_2,x_3,x_4} \phi_{123}(x_1,x_2,x_3).\;\phi_{234}(x_2,x_3,x_4).\;m_5^\star(x_2,x_4),\\
&= \max_{x_1,x_2,x_3} \phi_{123}(x_1,x_2,x_3).\;\max_{x_4}\phi_{234}(x_2,x_3,x_4).\;m_5^\star(x_2,x_4),\\
&= \max_{x_1,x_2,x_3} \phi_{123}(x_1,x_2,x_3).\;m_4^\star(x_2,x_3),\\
&= \max_{x_1,x_2}\left (\max_{x_3} \phi_{123}(x_1,x_2,x_3)\;m_4^\star(x_2,x_3)\right),\\
&= \max_{x_1,x_2}m_3^\star(x_1,x_2),\\
&= \max_{x_1}\left (\max_{x_2}m_3^\star(x_1,x_2)\right),
\end{align}
leading to 
\begin{align}
x_1^\star=\argmax_{x_1} m_2^\star(x_1).
\end{align}

\index{MARG}
Just like for the MARG problem, the complexity is reduced from $|\XX|^5$ (a priori) to $|\XX|^3$.
We would like to further reduce the complexity of the marginalizations (in $|\mathcal{X}|^3$). 
One simple idea would be to reduce the GM into a linear graph as in Fig. \ref{linear_graph}.
\begin{figure}[!h]
\begin{center}
	\includegraphics[width=0.5\columnwidth]{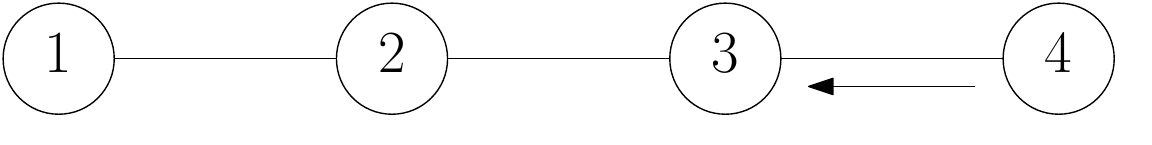}
\end{center}
\caption{A linear graph. Each marginalization is computed in $|\mathcal{X}|^2$ operations. }
\label{linear_graph}
\end{figure}\\

\begin{figure}[!h]
\begin{center}
	\includegraphics[width=0.5\columnwidth]{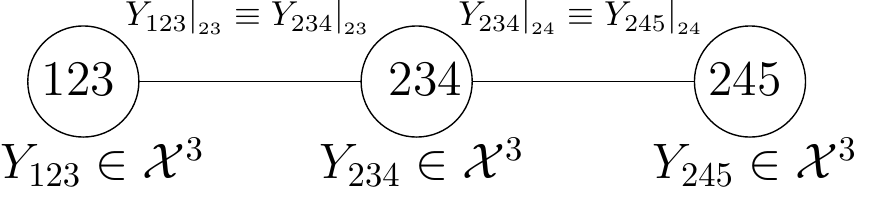}
\end{center}
\caption{Linear GM obtained by grouping variables.}
\label{groupedVars}
\end{figure}
By grouping variables in the GM (Fig. \ref{fig:graph1}), it is in fact possible
to obtain a linear graph, as shown in Fig. \ref{groupedVars},
with the associated potentials $\phi_{123}(Y_{123})$, $\phi_{234}(Y_{234})$ and $\phi_{245}(Y_{245})$ 
and the consistency constraints $Y_{123}|_{_{23}}\equiv Y_{234}|_{_{23}}$ and $Y_{234}|_{_{24}}\equiv Y_{245}|_{_{24}}$.
For other GMs, the simplest graph achievable by grouping variables might be a tree instead of a simple chain.
But not all groupings of variables will lead to a tree graph that correctly represents the problem.
In order for the grouping of variables to be correct, we need to build the tree attached to the maximal cliques, and
we have to resort to the Junction Tree property.\index{Junction tree}

\subsection{Junction Tree property and chordal graphs}
The Junction Tree property allows us to find groupings of variables under which the GM becomes a tree (if such groupings 
exist). On this tree, the elimination algorithm will need a lower number of maximizations/marginalizations than on the initial GM.
However, there is a remaining problem: running the algorithm on the junction tree does not give a straightforward solution to the 
initial problem, as the variables on the junction tree are groupings of variables of the original problem. This means that 
further maximizations/marginalizations are then required to have a solution in terms of the variables of the initial problem.
\subsubsection{Junction Tree (JCT) property}(\textbf{Definition})
A graph $\mathcal{G}=(\mathcal{V},\mathcal{E})$ is said to possess the JCT property if it has a Junction Tree $\mathcal{T}$ which is defined as follows: it is a tree graph such that 
\begin{itemize}
 \item its nodes are maximal cliques of $\mathcal{G}$
 \item an edge between nodes of $\mathcal{T}$ is allowed only if the corresponding cliques share at least one vertex
 \item for any vertex $v$ of $\mathcal{G}$, let $\mathcal{C}_v$ denote set of all cliques containing $v$. Then $\mathcal{C}_v$ forms a connected sub-tree of $\mathcal{T}$. 
\end{itemize}
\noindent 
Two questions then arise
\begin{itemize}
\item Do all graphs have a JCT?
\item If a graph has a JCT, how can we find it?\\
\end{itemize}

\subsubsection{Chordal graph}(\textbf{Definition})
A graph is chordal if all of its loops have chords. Fig. \ref{chordalGraphs} gives an illustration of the concept.
\begin{figure}[!h]
\begin{center}
	\includegraphics[width=0.25\columnwidth]{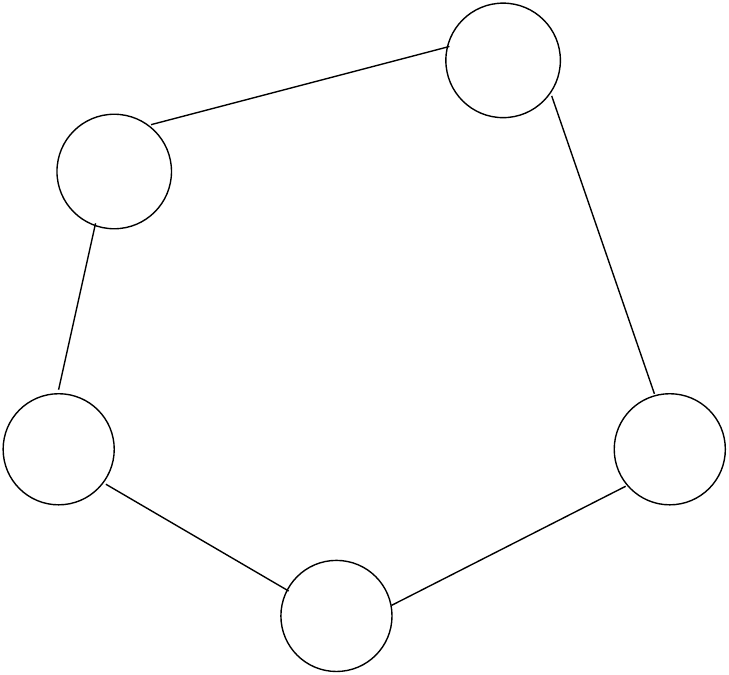}\hspace{0.5cm}
	\includegraphics[width=0.25\columnwidth]{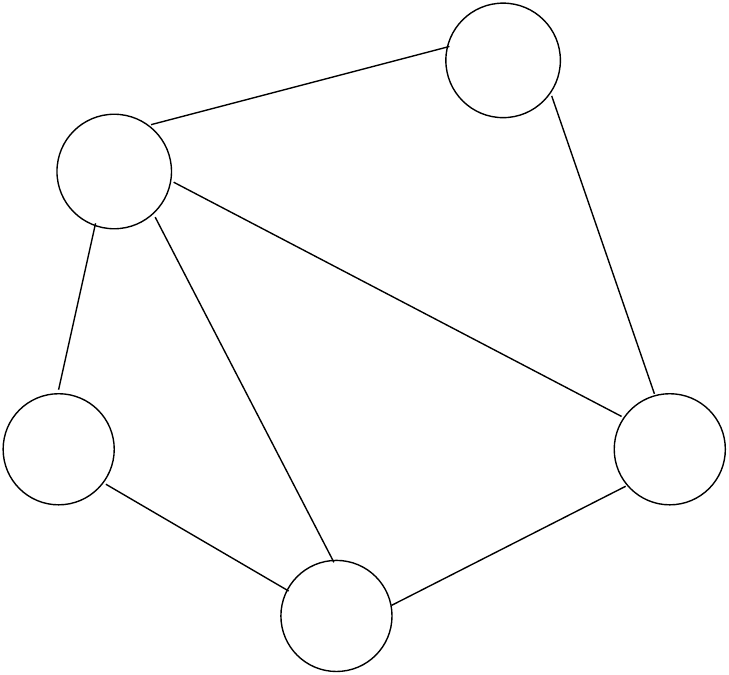}	
\end{center}
\caption{The graph on the left is not chordal, the one on the right is.}
\label{chordalGraphs}
\end{figure}

\begin{proposition}
$\mathcal{G}$ has a junction tree $\Leftrightarrow $ $\mathcal{G}$ is a chordal graph.\index{Junction tree}
\end{proposition}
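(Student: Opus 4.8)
The plan is to prove both directions of the equivalence $\mathcal{G}$ has a junction tree $\Leftrightarrow$ $\mathcal{G}$ is chordal.

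\medskip

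\noindent\textbf{($\Rightarrow$) A graph with a junction tree is chordal.}
I would argue by contradiction: suppose $\mathcal{G}$ has a junction tree $\mathcal{T}$ but contains a chordless cycle $v_1, v_2, \ldots, v_k, v_1$ of length $k \ge 4$. Since $v_1$ and $v_2$ are adjacent in $\mathcal{G}$, there is a maximal clique $C_{12}$ containing both; similarly there is a maximal clique $C_{1k}$ containing $v_1$ and $v_k$. By the running-intersection (connected sub-tree) property, the set $\mathcal{C}_{v_1}$ of cliques containing $v_1$ forms a connected subtree of $\mathcal{T}$, so there is a path in $\mathcal{T}$ from $C_{12}$ to $C_{1k}$ all of whose nodes contain $v_1$. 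Now I would track the ``other'' vertex along this path: since $v_2$ and $v_k$ are \emph{not} adjacent (no chord), no single clique contains both, so somewhere along this $v_1$-path the membership of $\{v_2, \ldots, v_k\}$ must change. Following this carefully, one finds two consecutive cliques $C, C'$ on the path and indices so that the separator $C \cap C'$ must, by the sub-tree property applied to some intermediate $v_i$, contain a vertex $v_i$ with $2 < i < k$ that is adjacent to $v_1$ — contradicting chordlessness of the cycle. This is essentially the standard argument that the clique tree of a chordal graph has separators that are minimal vertex separators; I expect the bookkeeping of indices here to be the fussiest part.

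\medskip

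\noindent\textbf{($\Leftarrow$) A chordal graph has a junction tree.}
Here I would use the characterization that a chordal graph admits a \emph{perfect elimination ordering} $v_1, \ldots, v_n$, i.e.\ an ordering such that for each $i$, the later neighbours of $v_i$ form a clique. (One proves existence of such an ordering by induction on $|\mathcal{V}|$, using the fact that a chordal graph has a simplicial vertex — a vertex whose neighbourhood is a clique — and that deleting it leaves a chordal graph.) Given a perfect elimination ordering, I would build the junction tree incrementally, or equivalently run the elimination algorithm: eliminating $v_i$ creates the clique $K_i = \{v_i\} \cup (N(v_i) \cap \{v_{i+1}, \ldots, v_n\})$, and the maximal cliques of $\mathcal{G}$ are exactly the maximal ones among the $K_i$. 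To assemble the tree, attach each maximal clique $C$ (other than the one formed last) to an earlier clique that contains the ``residual'' separator — concretely, connect $K_i$ to $K_j$ where $v_j$ is the lowest-indexed vertex of $N(v_i)\cap\{v_{i+1},\dots,v_n\}$. I would then verify the three defining properties: the nodes are maximal cliques by construction; adjacent cliques share vertices because the attachment goes through a nonempty separator; and the running-intersection property holds because, in a perfect elimination ordering, once a vertex $v$ stops appearing in the cliques produced it never reappears, so $\mathcal{C}_v$ is an interval along the construction and hence a connected subtree.

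\medskip

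\noindent\textbf{Main obstacle.}
The conceptual core — and the step I expect to be hardest to get exactly right — is the forward direction's index-chasing that converts ``$\mathcal{T}$ has the running-intersection property'' into ``every chordless cycle is impossible''. The backward direction is more mechanical once the existence of a simplicial vertex (equivalently, a perfect elimination ordering) is in hand; the one subtlety there is making sure the cliques one attaches are the \emph{maximal} cliques and that the attachment rule genuinely yields a tree (no cycles) and respects running intersection, which follows from always attaching ``backwards'' along the elimination order.
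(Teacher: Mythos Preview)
Your proposal is correct, but it differs from the paper's proof in two notable ways.

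First, the paper only proves the implication $\Leftarrow$ (chordal $\Rightarrow$ junction tree exists); the forward direction is not addressed at all. So your $\Rightarrow$ argument, while standard and sound in outline, is extra relative to what the paper does.

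Second, for the $\Leftarrow$ direction you and the paper share the same key ingredient --- the existence of a simplicial vertex in a chordal graph --- but you deploy it differently. You extract a full perfect elimination ordering up front and then build the junction tree globally from the elimination cliques $K_i$, attaching each to the clique indexed by its lowest later neighbour. The paper instead runs a bare one-step induction on $|\mathcal{V}|$: pick a single simplicial vertex $a$, delete it, invoke the induction hypothesis to get a junction tree $\mathcal{T}'$ of $\mathcal{G}\setminus\{a\}$, and then reinsert $a$ by a two-case analysis (either $C\setminus\{a\}$ is already a maximal-clique node of $\mathcal{T}'$, in which case one just enlarges that node, or it sits inside some maximal-clique node $D$, in which case one hangs $C$ off $D$ as a new leaf). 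The paper's argument is shorter and avoids having to verify the running-intersection property for a globally constructed tree; your PEO construction is more explicit and ties in directly with the elimination algorithm developed earlier in the paper, which is a nice conceptual payoff. Both are standard and both work.
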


\begin{proof} of the implication $\Leftarrow$.
Let us take a chordal graph $\mathcal{G} = (\mathcal{V},\mathcal{E})$ that is not complete, as represented in Fig. \ref{fig:graph4}.
\begin{figure}[!h]
\begin{center}
	\includegraphics[width=0.5\columnwidth]{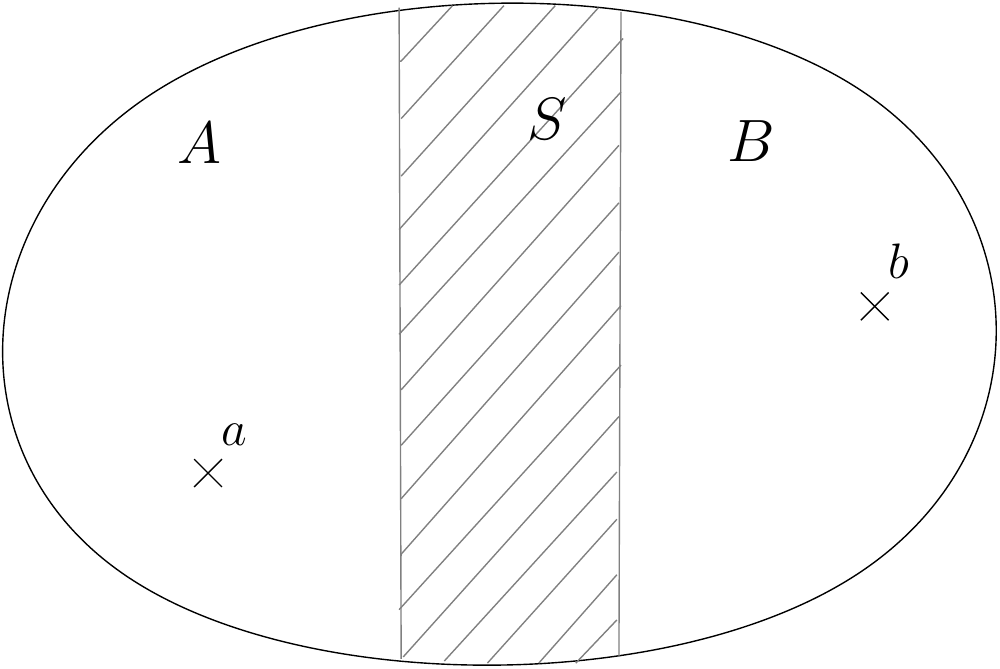}
\end{center}
\caption{On a chordal graph that is not complete, two vertices $a$ and $b$ that are not connected, separated by a subgraph $S$ that is fully connected. }
\label{fig:graph4}
\end{figure}\\

We will use the two following lemmas that can be shown to be true:
\begin{enumerate}
 \item If $\mathcal{G}$ is chordal, has at least three nodes and is not fully connected, then $\mathcal{V}=\mathcal{A} \cup \mathcal{B} \cup \mathcal{S}$, where all three sets are disjoint and $\mathcal{S}$ is a fully 
 connected subgraph that separates $\mathcal{A}$ from $\mathcal{B}$.
 \item If $\mathcal{G}$ is chordal and has at least two nodes, then $\mathcal{G}$ 
 has at least two nodes each with all neighbors connected. 
 Furthermore, if $\mathcal{G}$ is not fully connected, then there exist two nonadjacent nodes each with all its neighbors connected.
\end{enumerate}

The property \emph{``If $\mathcal{G}$ is a chordal graph with $N$ vertices , then it has a junction tree.''} can be shown by induction on $N$.
For $N=2$, the property is trivial. Now, suppose that the property is true for all integers up to $N$. 
Consider a chordal graph with $N+1$ nodes. By the second lemma, $\mathcal{G}$ has a node $a$ with all its neighbors connected. 
Removing it creates a graph $\mathcal{G}'$ which is chordal, and therefore has a JCT, $\mathcal{T}'$. Let $C$ be the maximal clique
that $a$ participates in. Either $C \setminus a$ is a maximal-clique node in $\mathcal{T}'$, and in this case adding $a$ to this clique node 
results in a junction tree $\mathcal{T}$ for $\mathcal{G}$. Or $C \setminus a$ is not a maximal-clique node in $\mathcal{T}'$. Then,
$C \setminus a$ must be a subset of a maximal-clique node $D$ in $\mathcal{T}'$. Then, 
we add $C$ as a new maximal-clique node in $\mathcal{T}'$, which we connect to $D$ to obtain a junction tree $\mathcal{T}$ for 
$\mathcal{G}$.\index{Junction tree}
\end{proof}


\subsubsection{Procedure to find a JCT}
Let $G$ be the initial GM, and $\mathcal{G}(\mathcal{V},\mathcal{E})$ be the GM in which $\mathcal{V}$ is the set of maximal cliques of $G$ and 
 $(c_1, c_2) \in \mathcal{E}$ if the maximal cliques $c_1$ and $c_2$ share a vertex.
 Let us take $e=(c_1,c_2)$ with $c_1, c_2\in\mathcal{V}$ and  define the weight of $e$ as $w_e=|c_1\cap c_2|$.
 Then, finding a junction tree of $G$ is equivalent to finding the max-cut spanning tree of $\mathcal{G}$.
Denoting by $T$ the set of edges in a tree, we define the weight of the tree as
\begin{align}
W(T)
&=\sum_{e\in T}w_e \\
&=\sum_{e\in T}|c_1\cap c_2| \notag \\
&=\sum_{v\in V}\sum_{e\in T} \mathbb{1}_{\lbrace v\in e\rbrace} \notag.
\end{align}
and we claim that $W(T)$ is maximal when T is a JCT.\\

\textbf{Procedure} to get the maximum weighted spanning tree
\begin{itemize}
\item List all edges in a decreasing order,
\item Include $e_i$ in $\mathcal{E}_{i-1}$ if you can.
\end{itemize}
what we are left with at the end of the algorithm is the maximal weight spanning tree.

\subsubsection{Tree width}(\textbf{Definition})
The width of a tree decomposition is the size of its maximal clique minus one.\\~\\
\newpage
\emph{Toy examples}
\begin{figure}[!h]
\begin{center}
	\includegraphics[width=0.25\columnwidth]{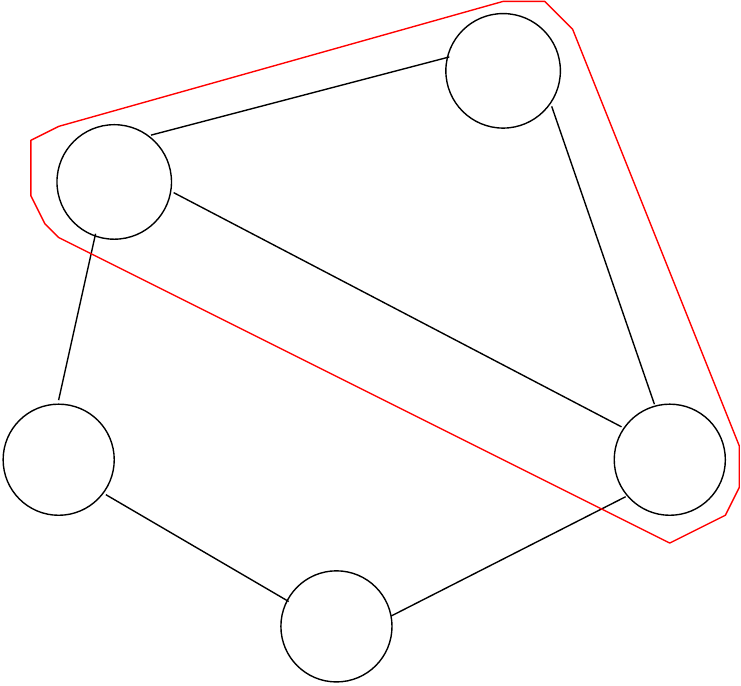}\hspace{0.5cm}
	\includegraphics[width=0.4\columnwidth]{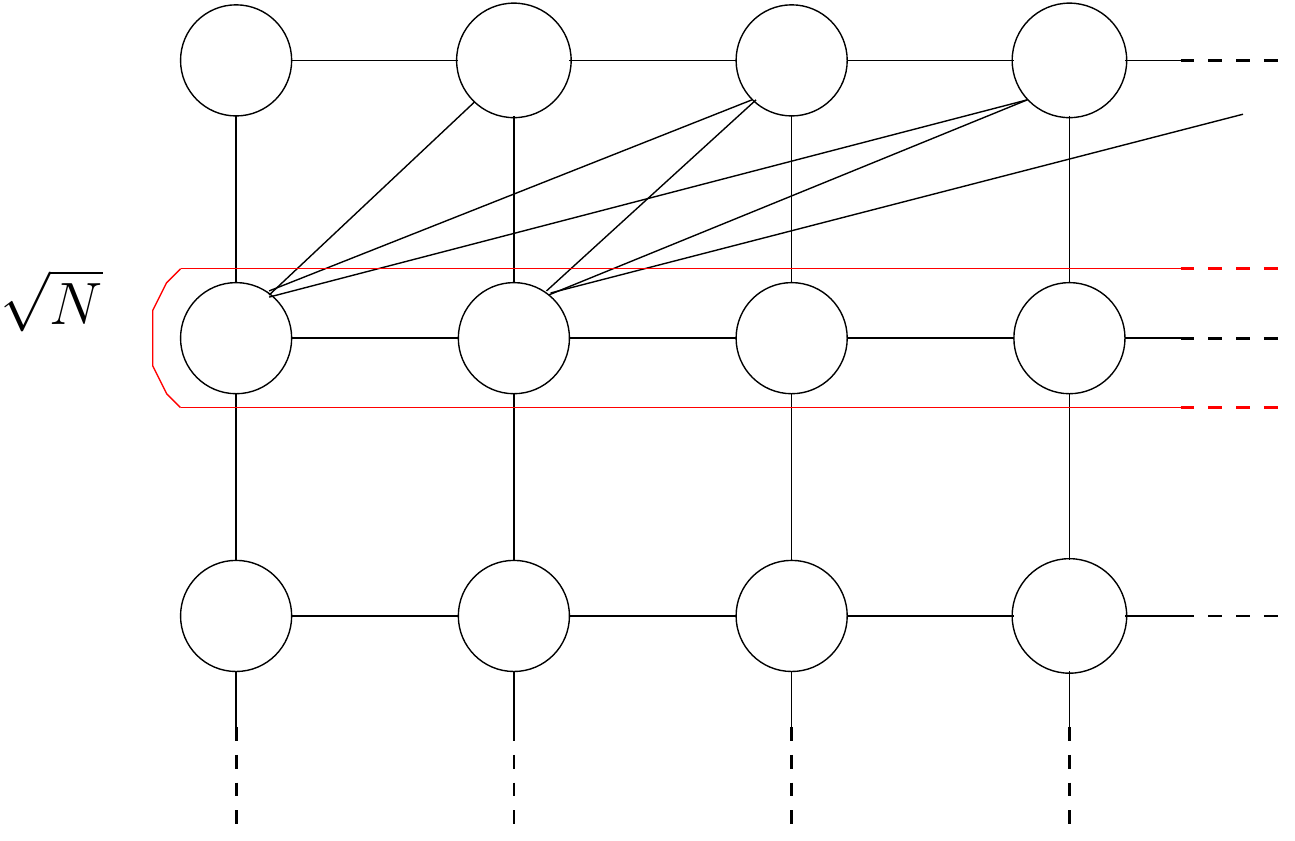}
	\caption{tree width = 2 (left), tree width = $\sqrt{N}$ (right)}	
\end{center}
\end{figure}

\subsection{Belief propagation (BP) algorithms}
\index{Belief propagation}

Until now, everything we have done was exact. The elimination algorithm is an exact algorithm. But as we are interested in \textbf{efficient algorithms},
as opposed to exact algorithms with too high complexities to actually end in reasonable time, we will from now on \textbf{introduce approximations}.

\begin{figure}[!h]
\begin{center}
	\includegraphics[width=0.3\columnwidth]{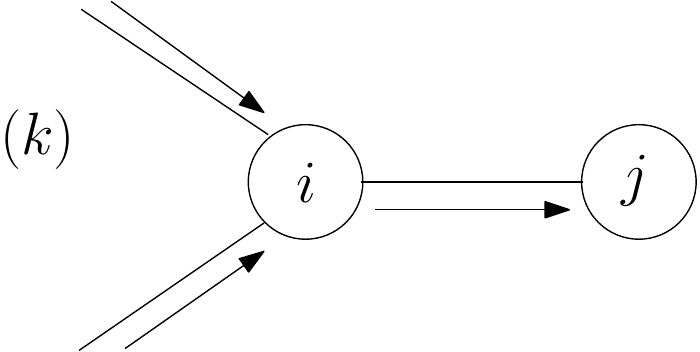}
\end{center}
\caption{Message passing on a graph.}
\end{figure}
Coming back to the elimination algorithm \eqref{eq:beginfer}-\eqref{eq:endinfer}, we can generalize the notations used as
\begin{align}
m_i(x_j)\propto \sum_{x_i} \phi_i(x_i) .\;\phi_{i,j}(x_i,x_j).\prod_k m_k(x_i).
\end{align}
Considering now the same but oriented GM (arrows on figure above), we get
\begin{align}
m_{i\rightarrow j}(x_j)\propto \sum_{x_i} \phi_i(x_i). \;\phi_{i,j}(x_i,x_j).\prod_{k\in N(i)\setminus j} m_{k\rightarrow i}(x_i), 
\end{align}
where $N(i)$ is the neighbourhood of $x_i$.\\

\newpage
\index{MARG}
The MARG problem can then be solved using the \textbf{sum-product} procedure.
\begin{algorithm}
\textbf{Sum-product BP}
\begin{itemize}
\item $t=0$,
\begin{align}
\forall (i,j)\in E, \forall (x_i,x_j) \in \XX^2:  \quad m^0_{i\rightarrow j}(x_j)=m^0_{j\rightarrow i}(x_i)=1.
\end{align}
\item $t>0$,
\begin{align}
m^{t+1}_{i\rightarrow j}(x_j)&\propto\sum_{x_i}\phi_i(x_i).\;\phi_{ij}(x_i,x_j).\prod_{k\in N(i)\setminus j} m^t_{k\rightarrow i}(x_i), \\
P^{t+1}_{X_i}(x_i)&=\prod_{k\in N(i)}m^{t+1}_{k\rightarrow i}(x_i).
\end{align}
\end{itemize}
\end{algorithm}

\index{MAP}
While, for the MAP problem, the \textbf{max-sum} procedure is considered.
\begin{algorithm}
\textbf{Max-sum BP}
\begin{itemize}
\item $t=0$,
\begin{align}
m^0_{i\rightarrow j}(x_j)=m^0_{j\rightarrow i}(x_i)=1.
\end{align}
\item $t>0$,
\begin{align}
m^{t+1}_{i\rightarrow j}(x_j)&\propto\max_{x_i}\phi_i(x_i).\;\phi_{ij}(x_i,x_j).\prod_{k\in N(i)\setminus j} m^t_{k\rightarrow i}(x_i), \\
x_i^{t+1}&=\argmax_{x_i}\phi_i(x_i).\;\prod_{k\in N(i)}m^{t+1}_{k\rightarrow i}(x_i).
\end{align}
\end{itemize}
\end{algorithm}

Note: here, we use only the potentials of pairs. But in case of cliques, we have to consider the JCT and iterate on it. 
To understand this point, let us apply the sum-product algorithm on factor graphs.
\subsubsection{Factor graphs}
\begin{figure}[!h]
\begin{center}
	\includegraphics[width=0.3\columnwidth]{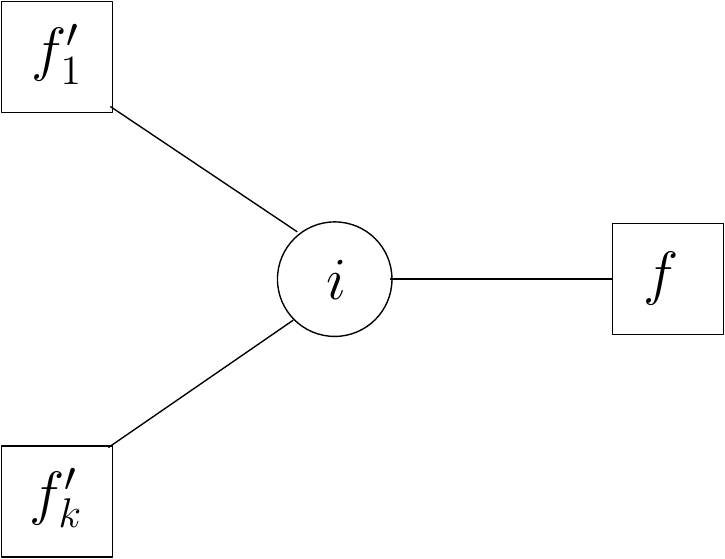}
\end{center}
\caption{A simple factor graph.}
\label{fig:graph10}
\end{figure}
Considering the general notations in Fig. \ref{fig:graph10}, 
the sum-product BP algorithm is particularized such that
\begin{align}
m^{t+1}_{i\rightarrow f}(x_i)&=\prod_{f'\in N(i)\setminus f}m^{t}_{f'\rightarrow i}(x_i), \\
m^{t+1}_{f\rightarrow i}(x_i)&=\sum_{x_j, j\in N(f)\setminus i}f(x_i,x_j) \prod_{j\in N(f)\setminus i} m^{t}_{j\rightarrow f}(x_j).
\end{align}

\emph{On a tree,} the leaves are sending the right messages at time 1 already, and after a number of time steps proportional
to the tree diameter\footnote{The eccentricity of a vertex v in a graph is the maximum distance from v to any other vertex. The diameter of a graph is the maximum eccentricity over all vertices in a graph.},
all messages are correct: the steady point is reached and the algorithm is exact. Therefore, \textbf{BP is exact on trees}. 
The JCT property discussed before is therefore useful, and can in certain cases allow us to construct graphs on which 
we know that BP is exact. However, the problem mentioned before remains: if BP is run on the JCT of a GM, subsequent 
maximizations/marginalizations will be necessary to recover the solution in terms of the initial problem's variables.

\section{Understanding Belief Propagation}
\index{Belief propagation}

We have seen how to use the (exact) elimination algorithm in order to design the BP algorithms max-product and sum-product, that are exact only 
on trees. The JCT property has taught us  how to group 
variables of an initial loopy GM such that the resulting GM is a tree (when it is possible), on which we can then run BP with a guarantee of an exact result.
However, the subsequent operations that are necessary to obtain the solution in terms of the initial problem's variables can 
be a new source of intractability.
Therefore, we would like to know what happens if we use BP on the initial (loopy) graph anyway. The advantage 
is that BP remains tractable because of the low number of operations per iteration. 
The danger is that BP is not exact anymore and therefore we need to 
ask ourselves the following 3 questions:
\begin{enumerate}
 \item Does the algorithm have fixed points?
 \item What are those fixed points?
 \item Are they reached?
\end{enumerate}
The analysis will be made with the sum-product BP algorithm, but could be carried out similarily for the max-product version.

\subsection{Existence of a fixed point}
The algorithm is of the type
\begin{equation}
 \cv{m}^{t+1} = F\left( \cv{m}^t \right) \quad {\rm with} \quad \cv{m}^t \in [0,1 ]^{2 |\mathcal{E}| |\mathcal{X} |}
\end{equation}
and \textbf{the existence of a fixed point is guaranteed} by a theorem.

\subsection{Nature of the fixed points}
Let us remind that we had factorized $P_{\cv{X}}(\cv{x}) $ in this way:
\begin{align}
 P_{\cv{X}}(\cv{x})  &\propto \prod_{i \in \mathcal{V}} \phi_i(x_i) \prod_{(i,j)\in \mathcal{E}} \psi_{ij}(x_i, x_j) \notag \\
                     &=\frac{1}{Z} e^{Q(\cv{x})}.\label{eq:pq}
\end{align}
The fixed points are a solution of the following problem
\begin{align}
P_{\cv{X}} \in\argmax_{\mu \in M( \mathcal{X}^N ) } \E_\mu \left[ Q(X) \right] + H(\mu) \label{maximization}
\end{align}
with
\begin{align}
 \E_\mu \left[ Q(X) \right] + H(\mu) = \sum_{\cv{x}} \mu(\cv{x}) Q(\cv{x})- \sum_{\cv{x}} \mu(\cv{x}) \log \mu(\cv{x}) = F(\mu).
\end{align}
Let us find a bound for this quantity. From \eqref{eq:pq}, we get $Q(\cv{x})=\log P_{\cv{X}}(\cv{x})+\log Z$. Then
\begin{align}
 F(\mu) &= \left( \sum_{\cv{x}} \mu(\cv{x}) \log Z \right) + \left( \sum_{\cv{x}} \mu(\cv{x})\log \frac{P_{\cv{X}}(\cv{x})}{\mu(\cv{x})} \right) \\
 &= \log Z + \E_{\mu} \left[ \log \frac{P_{\cv{X}}}{\mu(\cv{x})} \right] \notag \\
 &\leq \log Z + \log \E_{\mu} \left[ \frac{P_{\cv{X}}}{\mu} \right]  \quad \text{using Jensen's inequality}  \notag \\
 &\leq \log Z \notag
\end{align}
and the equality is reached when the distributions $\mu$ and $P$ are equal.\\~\\
This maximization in equation (\ref{maximization}) is made over the space of all possible distributions, which is a far too big search space. But if we restrict ourselves to trees, we know that $\mu$ has the form:
\begin{align}
 \mu \propto \prod_i \mu_i \prod_{(i,j)} \frac{\mu_{ij}}{\mu_i \mu_j}
\end{align}
BP has taught us that:
\begin{align}
 \mu_i &\propto \phi_i \prod_{k \in N(i)} m_{k \rightarrow i}  \\
 \mu_{ij} &\propto \prod_{k \in N(i) \backslash j} m_{k \rightarrow i} \, \phi_i \, \psi_{ij} \, \phi_j \prod_{l \in N(j) \backslash i} m_{l \rightarrow j}
\end{align}
If we marginalize $\mu_{ij}$ with respect to $x_j$, we should obtain $\mu_i$: $\sum_{x_j} \mu_{ij}(x_i,x_j) = \mu_i(x_i)$. Writing this out, we obtain:
\begin{align}
 \prod_{k\in N(i) \backslash j} m_{k \rightarrow i} \phi_i \left( \sum_{x_j} \psi_{ij} \phi_{j} \prod_{l \in N(i) \backslash j} m_{l \rightarrow j} \right) &= \phi_i \prod_{ k \in N(i) } m_{k \rightarrow i}
\end{align}
and this should lead us to what we believe from the fixed points of BP.
Let us make a recharacterization in terms of the fixed points. In order to lighten notations, we will write $\phi$ instead of $\log \phi$ and $\psi$ instead of $\log \psi$:
\begin{align}
 F_{\text{Bethe}} ( \mu ) &= \E_{\mu} \left[ \sum_i \phi_i + \sum_{i,j} \psi_{ij} \right] - \E_{\mu} \left[ \log \mu \right]
\end{align}
We now use following factorization
\begin{align}
 \E_{\mu} \left[ \log \mu \right] &= - \sum_i \E_{\mu_i} \left[ \log \mu_i \right] - \sum_{ij} \left( \E_{\mu_{ij}} \left[ \log \mu_{ij} \right] - \E_{\mu_i} \left[\log \mu_i \right] - \E_{\mu_j} \left[ \log \mu_j\right] \right)
\end{align}
and obtain a new expression for the Bethe free energy\index{Bethe free energy}
\begin{equation}
 F_{\text{Bethe}} = \sum_i (1 -d_i) \left( H_{\mu_i} + \E_{\mu_i}\left[ \phi_i \right] \right) + \sum_{ij} \left( H(\mu_{ij}) + \E_{\mu_{ij}} \left[ \psi_{ij} + \phi_i + \phi_j \right] \right),
\end{equation}
where $d_i$ is the degree of node $i$.

\subsubsection{Background on Nonlinear Optimization}
The problem
\begin{align}
\max_q \quad G(q)\quad \text{s.t.}\quad Aq = b
\end{align}
can be expressed in a different form by using Lagrange multipliers $\lambda$
\begin{equation}
 L(q,\lambda ) = G(q) + \lambda^T (Aq-b)
\end{equation}
and maximizing
\begin{equation}
 \max_q L(q,\lambda) = M( \lambda) \leq G(q^*) \notag
\end{equation}
\begin{equation}
 \inf_{\lambda} M(\lambda) \leq G(q^*). \notag
\end{equation}
Let us look at all $\lambda$ such that $\nabla_q L(q) = 0$. In a sense, BP is finding stationary points of this Lagrangian.

\index{Variational}
\subsubsection{Belief Propagation as a variational problem}
\index{Belief propagation}

In our case, here are the conditions we will enforce with Lagrange multipliers:
\begin{align}
 &\mu_{ij}(x_i,x_j) \geq 0 & \\
 &\sum_{x_i} \mu_{i}(x_i) = 1 \quad  &\rightarrow \lambda_i \quad\quad\quad\\
 &\sum_{x_j} \mu_{ij}(x_i,x_j) = \mu_i(x_i) \quad &\rightarrow \lambda_{j \rightarrow i}(x_i) \\
& \sum_{x_i} \mu_{ij}(x_i,x_j) = \mu_j(x_j) \quad &\rightarrow \lambda_{i \rightarrow j}(x_j)
\end{align}
The complete Lagrangian reads
\begin{align}
 \mathcal{L} = F_{\text{Bethe}} (\mu) &+ \sum_i \lambda_i \left( \sum_{x_i} \mu_i(x_i) -1 \right) \notag \\
          &+ \sum_{ij} \left [ \left( \sum_{x_j} \mu_{ij}(x_i,x_j) - \mu_i(x_i) \right) \lambda_{j \rightarrow i} (x_i) \right. \notag \\
    &+ \left. \left( \sum_{x_i} \mu_{ij}(x_i,x_j) - \mu_j(x_j) \right) \lambda_{i \rightarrow j} (x_j) \right ].
\end{align}
We need to minimize this Lagrangian with respect to all possible variables, which we obtain by setting the partial derivatives to zero:
\begin{align}
 \frac{ \partial \mathcal{L}}{\partial \mu_i(x_i)} &= 0 \\
 &= - (1-d_i)(1+\log \mu_i(x_i)) + (1-d_i) \phi_i(x_i) + \lambda_i - \sum_{j \in N(i)} \lambda_{j \rightarrow i} (x_i) \notag
\end{align}
which imposes following equality for the distribution $\mu_i$:
\begin{equation}
 \boxed{ \mu_i(x_i) \propto e^{\phi_i(x_i) + \frac{1}{d_i-1} \sum_{j\in N(i)} \lambda_{j \rightarrow i}(x_i)} }
\end{equation}
Let us now use the transformation $\lambda_{j \rightarrow i} (x_i) = \sum_{k \in N(i) \backslash j} \log m_{k \rightarrow i} (x_i) $, and we obtain
\begin{equation}
  \sum_{j \in N(i)} \lambda_{j \rightarrow i} (x_i) \equiv (d_i -1) \sum_{j \in N(i)} \log m_{j \rightarrow i}(x_i).
\end{equation}
In the same way, we can show that:
\begin{equation}
 \frac{ \partial \mathcal{L}}{\partial \mu_{ij}(x_i,x_j)} = 0 \quad \Rightarrow \boxed{ \mu_{ij}(x_i,x_j) \propto e^{\phi_i(x_i) + \phi_j(x_j) + \psi_{ij}(x_i,x_j) + \lambda_{j \rightarrow i}(x_i) + \lambda_{i \rightarrow j} (x_j)}} \nonumber
\end{equation}
This way, we found the distributions $\mu_i$ and $\mu_{ij}$ that are the fixed points of BP.

\subsection{Can the fixed points be reached?}
We will now try to analyze if the algorithm can actually reach those fixed points that we have exhibited in the previous section.
Let us look at the simple (but loopy) graph in Fig. \ref{fig:fig8}.
\begin{figure}[h!]
\begin{center}
 \includegraphics[width=0.3\textwidth]{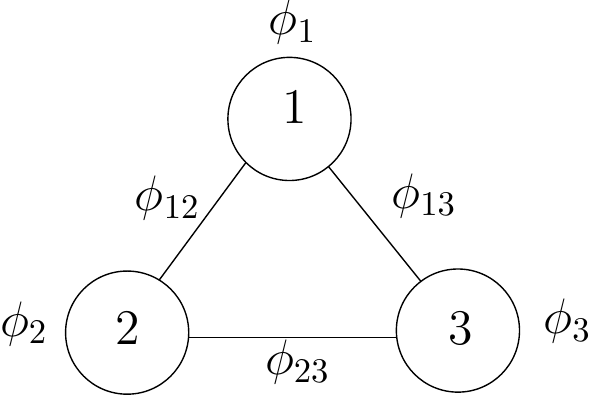}
\end{center}
 \caption{A simple loopy graph.}
 \label{fig:fig8}
\end{figure}
At time $t=1$, we have
\begin{equation}
 m_{2\rightarrow 1}^1 (x_1) \propto \sum_{x_2} \phi_2(x_2) \phi_{12}(x_1,x_2) \underbrace{m_{3 \rightarrow 2}^0 (x_2)}_{=1}
\end{equation}
and 
\begin{equation}
 m^1_{3 \to 1} \propto \sum_{x_3} \phi_3 \phi_{13}
\end{equation}
which also corresponds to the messages of the modified graph in Fig. \ref{fig:fig9}.
\begin{figure}[h!]
\begin{center}
  \includegraphics[width=0.3\textwidth]{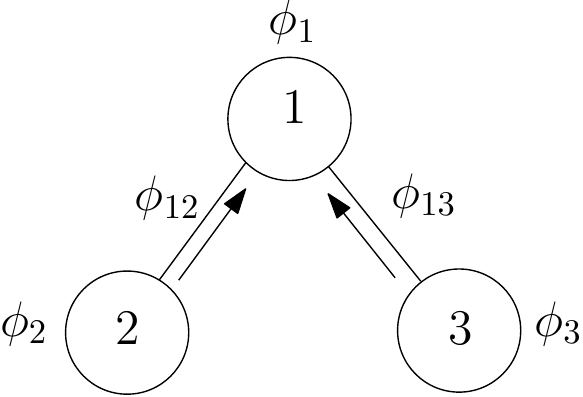}
\end{center}
\caption{Graph seen by BP at time $t=1$.}
\label{fig:fig9}
\end{figure}
\begin{figure}[h]
\begin{center}
  \includegraphics[width=0.3\textwidth]{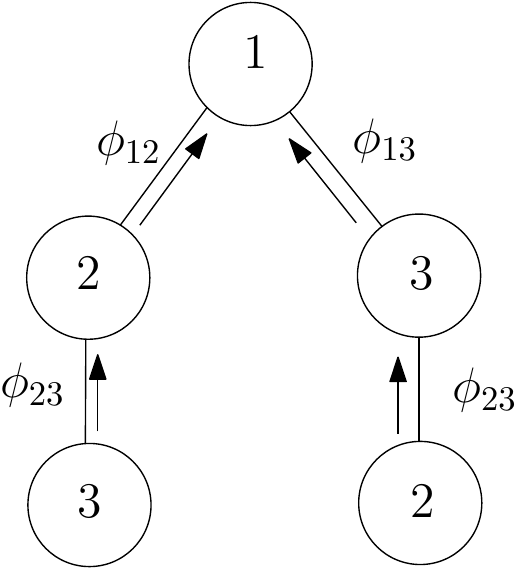}
\end{center}
\caption{Graph seen by BP at time $t=2$.}
\label{fig:fig10}
\end{figure}

At time $t=2$, the messages will be as
\begin{equation}
 m_{2 \rightarrow 1}^{2} \propto \sum_{x_2} \phi_2 \phi_{12} m_{3 \rightarrow 2}^{1}(x_2)
\end{equation}
corresponding to the messages on the modified graph in Fig. \ref{fig:fig10}.
If we increase $t$, the corresponding non-loopy graph gets longer at each time step.\\
Another way of seing this is by looking at the recursion equation:
\begin{align}
 F_{ij} (m^*) &= m_{ij}^* \\
 m_{ij}^{t+1} &= F_{ij} (m^t) \notag \\
 | m_{ij}^{t+1} - m_{ij}^* | &= |F_{ij}(m^t) - F_{ij}(m^*) | \notag \\
   &= | \nabla F_{ij} (\theta)^T (m^t -m^*) | \quad \text{(mean value theorem)}  \notag \\
   |m^{t+1} - m^*|_{\infty} &\leq | \nabla F_{ij} (\theta)|_{1} |m^t -m^*|_{\infty}
\end{align}
From this last inequality, it is clear that if we can prove that $|F_{ij}|_1$ is bounded by some constant $\rho < 1$, the 
convergence is proved. Unfortunately, it is not often easy to prove such a thing. 

\subsubsection{The hardcore model}
In the hardcore model, we have
\begin{align}
 \phi_i(x_i) &= 1 \quad \text{for all } \, x_i \in \{ 0,1 \} \\
 \psi_{ij}(x_i,x_j) &= 1 - x_i x_j.
\end{align}
\noindent
Instead of using BP, let us do the following gradient-descent like algorithm:
\begin{equation}
 y(t+1) = \left[ \left. y(t) + \alpha(t) \frac{\partial F}{\partial y_i}\right|_{y(t)}  \right]
\end{equation}
where the operator $\left[ . \right]$ is a clipping function that ensures that the result stays in the interval $(0,1)$.
This is a projected version of a gradient algorithm with variable step size $\alpha(t)$. Choosing this step size with following rule:
\begin{equation}
 \alpha(t) = \frac{1}{\sqrt{t}}\; \frac{1}{2^d}
\end{equation}
then we can show that in a time $T \sim n^2 2^d \frac{1}{\epsilon^4}$ we will find $F_b$ up to $\epsilon$, and convergence is proved.


\section{Learning Graphical Models}
\index{Learning}
\index{Graphical model}

In this final section, we focus on the learning problem. In particular, we consider three different cases:
\begin{itemize}
\item Parameter learning\\
Given a graph, the parameters are learned from the observation of the entire set of realizations of all random variables.
\item Graphical model learning\\
Both the parameters and the graph are learned from the observations of the entire set of realizations of all random variables.
\item Latent graphical model learning\\
The parameters and  the graph are learned from partial observations: some of the random variables are assumed to be hidden.
\end{itemize}

\subsection{Parameter learning}
 \subsubsection{Single parameter learning}
 We consider the following simple setting where $x_i$ is a Bernoulli random variable with parameter $\theta$:
 \begin{align}
 P_X(x_i,\theta)&=
\left\lbrace 
	\begin{array}{ll}
	\theta & \text{if $x_i=1$},\\
	1-\theta & \text{if $x_i=0$}.
	\end{array}
	\right.
 	\label{Bernoulli}
 \end{align}
\index{MAP}
Given observations $\lbrace x_1,\ldots, x_S \rbrace$, we are interested in the MAP estimation of the parameter $\theta$:
 \begin{align}
\hat{ \theta}^{MAP}
 &= \argmax_{\theta\in [0,1]} P(\theta|x_1,\ldots x_S),\nonumber\\
 &= \argmax_{\theta\in [0,1]} P(x_1,\ldots x_S|\theta)\;p(\theta),
 \end{align}
 where maximizing $P(x_1,\ldots x_S|\theta)$ leads to the maximum likelihood (ML) estimator $\hat{\theta}^{ML}$ of $\theta$.\\
 Denoting $\mathcal{D}\triangleq\lbrace x_1,\ldots x_S \rbrace$ the observed set  of realizations, we define the empirical likelihood 
  as follows:
 \begin{align}
 \ell(\mathcal{D};\theta)
 &= \frac{1}{S} \log P(x_1,\ldots x_S|\theta),\nonumber\\
 &= \frac{1}{S} \sum_i \log P(x_i|\theta),\nonumber\\
 &=\hat{P}(1)\;\log \theta + \hat{P}(0)\;\log (1-\theta),\label{eq:thetaest}
 \end{align}
 with $\hat{P}(1) = \frac{1}{S}\sum_i^S\mathbb{1}_{\lbrace x_i=1\rbrace}$. Derivating \eqref{eq:thetaest} and setting the result to zero, we obtain
the \textbf{maximal likelihood estimator} $\hat{\theta}^{ML}$: 
 \begin{align}
& \frac{\partial}{\partial \theta} \ell(\mathcal{D};\theta) = \frac{\hat{P}(1)}{\theta}-\frac{\hat{P}(0)}{1-\theta}= 0, \nonumber \\
 \Rightarrow \quad\quad&\quad\quad\quad\quad\boxed{\hat{\theta}^{ML}=\hat{P}(1)}
 \end{align}
 What is the amount of samples $S$ needed to achieve $\hat{\theta}^{ML}(S) \approx (1\pm \epsilon)\theta$? Considering the binomial variable
 $B(S,\theta)$ (which is the sum of $S$ independently drawn Bernoulli variables from (\ref{Bernoulli})), we can write
 \begin{align}
 &P(|B(S,\theta)-S\theta|>\epsilon S\theta)\sim \exp(-\epsilon^2 S\theta)\leq \delta,\nonumber\\
 \Rightarrow \quad\quad&\quad\quad\quad\quad\boxed{S\geq \frac{1}{\theta}\frac{1}{\epsilon^2}\log \frac{1}{\delta}}
 \end{align}
 
 \subsubsection{Directed graphs}
 We consider the following setting in which we have not one, but many random variables to learn on a directed graph:
 \begin{align}
 P_{\underline X}(\underline{x})\propto\prod_i P_{X_i|X_{\Pi_i}}(x_i|x_{\Pi_i}), 
 \end{align}
 where $\Pi_i$ stands for the parents of node $i$, and $P_{X_i|X_{\Pi_i}}(x_i|x_{\Pi_i})\triangleq \theta_{x_i,x_{\Pi_i}}$.\\
Again, we look at the empirical likelihood
\begin{align}
\ell(\mathcal{D};\underline{\theta})
&=\sum_i \sum_{x_i,x_{\Pi_i}} \hat{P}(x_i,x_{\Pi_i})\log  \theta_{x_i,x_{\Pi_i}},\nonumber\\
&=\sum_i \sum_{x_i,x_{\Pi_i}} \hat{P}(x_i|x_{\Pi_i})\hat{P}(x_{\Pi_i}) \left[\log \frac{\theta_{x_i,x_{\Pi_i}}}{\hat{P}(x_i|x_{\Pi_i})} + \log \hat{P}(x_i|x_{\Pi_i})\right], \nonumber\\
&=\sum_i \sum_{x_i,x_{\Pi_i}} \hat{P}(x_i|x_{\Pi_i})\hat{P}(x_{\Pi_i}) \log \frac{\theta_{x_i,x_{\Pi_i}}}{\hat{P}(x_i|x_{\Pi_i})},
\end{align}
and set the derivative to zero in order to obtain the ML estimation of $\underline{\theta}$, resulting in
\begin{align}
&\sum_{x_i} \hat{P}(x_i|x_{\Pi_i})\log \frac{\theta_{x_i,x_{\Pi_i}}}{\hat{P}(x_i|x_{\Pi_i})}=
\E_{\hat{P}} \left[\log \frac{\theta_{x_i,x_{\Pi_i}}}{\hat{P}(x_i|x_{\Pi_i})}\right], \notag \\
 \Rightarrow \quad\quad&\quad\quad\quad\quad\boxed{\hat{\theta}_{x_i,x_{\Pi_i}}^{ML}=\hat{P}(x_i|x_{\Pi_i})}
 \end{align}
 
 \subsubsection{Undirected graphs}\index{Undirected}
 Let us now consider the case of undirected graphs.
 To reduce the amount of indices, we will write $i$ instead of $x_i$ in the following.
 \begin{align}
  &\text{On a tree, } &P_X= \prod_i P_i \prod_{ij} \frac{P_{ij}}{P_i P_j} & \to \text{ possible estimator: } \hat{P}_i \frac{\hat{P}_{ij}}{\hat{P}_i \hat{P}_j} \nonumber \\
  &\text{on a chordal graph, } &P_X \propto \frac{\prod_C \phi_C(x_C)}{\prod_S \phi_S(x_S)} & \to \text{ possible estimator: } \frac{\hat{P}_C}{\hat{P}_S} \nonumber \\
  &\text{on a triangle-free graph, } &P_X \propto \prod \phi_i \prod_{ij} \psi_{ij} \nonumber
 \end{align}
For the last case, let us use the Hammersley-Clifford theorem. Let $\mathcal{X} = \{ \text{0}, \text{1} \}$. On a triangle-free graph,
the maximal clique size is 2, and therefore we can write
\begin{equation}
 P_{\cv{X}}(\cv{x}) \propto \exp \left( \sum_i U_i(x_i) + \sum_{ij} V_{ij}(x_i,x_j) \right).
\end{equation}
Using the fact that we have a MRF, we get
\begin{align}
 \frac{P(X_i=1,X_{rest} = \mathbb{0})}{P(X_i=0,X_{rest} = \mathbb{0})} \propto \exp \left(  Q({i}) \right).
\end{align}
Also, because of the fact that on a MRF, a variable conditioned on its neighbours is independent of all the others, we can write
\begin{align}
 \frac{P(X_i=1,X_{rest} = \mathbb{0})}{P(X_i=0,X_{rest} = \mathbb{0})} = \frac{P(X_i=1,X_{N(i)} = \mathbb{0})}{P(X_i=0,X_{N(i) } = \mathbb{0})}
\end{align}
and therefore this quantity can be calculated with $2^{|N(i)| + 1 }$ operations.

 \subsection{Graphical model learning}
What can we learn from a set of realizations of variables when the underlying graph is not known? We focus now in the following maximisation
 \begin{align}
\max_{\mathcal{G}, \theta_\mathcal{G }} \ell(\mathcal{D};\mathcal{G}, \theta_\mathcal{G}) =\max_{\mathcal{G}} \underbrace{\max_{\theta_\mathcal{G }} \ell(\mathcal{D};\mathcal{G}, \theta_\mathcal{G})}_{\hat{\ell}(\mathcal{D};\mathcal{G}  ) \triangleq\ell(\mathcal{D}; \mathcal{G} , \hat{\theta}_{\mathcal{G}}^{ML}) }.
 \end{align}
 From the previous subsection, we have $\hat{\theta}^{ML}_\mathcal{G}$, and therefore we only need to find a way to evalute the maximization on 
 the possible graphs.
  
  \subsubsection{Directed graphs}
 On a directed graph $\mathcal{G}\rightarrow (i,\Pi_i)$, the empirical likelihood reads
\index{Directed}
 \begin{align}
 \hat{\ell}(\mathcal{D};\mathcal{G})
 &=\sum_i \sum_{x_i,x_{\Pi_i}} \hat{P}(x_i,x_{\Pi_i})\log \hat{P}(x_i|x_{\Pi_i}),\nonumber\\
 &=\sum_i \sum_{x_i,x_{\Pi_i}} \hat{P}(x_i,x_{\Pi_i})\log\left[ \frac{\hat{P}(x_i,x_{\Pi_i})}{\hat{P}(x_i)\hat{P}(x_{\Pi_i})} \hat{P}(x_i)\right], \nonumber\\
 &=\sum_i \sum_{x_i,x_{\Pi_i}} \hat{P}(x_i,x_{\Pi_i})\log\frac{\hat{P}(x_i,x_{\Pi_i})}{\hat{P}(x_i)\hat{P}(x_{\Pi_i})} +\sum_{x_i}\hat{P}(x_i)\log \hat{P}(x_i),\nonumber \\
  &=\sum_i I(\hat{X}_i;\hat{X}_{\Pi_i})- H(\hat{X}_i).
 \end{align}

 Looking for the graph maximizing the empirical likelihood thus consists in maximising the mutual information: $\max_\mathcal{G} \sum_i I(\hat{X}_i;\hat{X}_{\Pi_i})$. 
 In a general setting, this is not easy. \textbf{Reducing the search space to trees} however, some methods exist, like the Chow-Liu algorithm \cite{CL1968}, which relies on the procedure used to get the maximum weighted spanning tree (cf. section 2).
 
 \subsubsection{Undirected graphs}\index{Undirected}
What can we do in the case of undirected graphs? 
Let us restrict ourselves to the binary case $\cv{x} \in \{ {\rm 0},{\rm 1} \}^N $ and to exponential families:
 \begin{align}
 P_{\underline X}(\underline{x})=\exp\left(\sum_i \theta_i x_i+\sum_{i,j}\theta_{ij}x_i x_j - \log Z(\underline{\theta})\right).
 \end{align}
Again, we denote $\mathcal{D} = \{ \cv{x}^1, \cdots , \cv{x}^S \}$ the observed dataset, and 
 the log-likelihood can be written as
\begin{align}
\ell(\mathcal{D};\underline{\theta})= \underbrace{\sum_i \theta_i \mu_i + \sum_{i,j} \theta_{ij} \mu_{ij}}_{\langle \theta, \mu \rangle} - \log Z(\underline{\theta}).
\end{align}
As $\ell(\mathcal{D};\underline{\theta})$ is a concave function of $\underline{\theta}$, it can be efficiently solved using a gradient descent algorithm of the form
\begin{align}
\boxed{\underline{\theta}^{t+1}=\underline{\theta}^{t} + \alpha(t) \nabla_{\underline{\theta}} \ell(\mathcal{D};\underline{\theta})|_{\underline{\theta}=\underline{\theta}^t}}
\end{align}
The difficulty in this formula is the evaluation of the gradient:
\begin{align}
  \nabla_{\underline{\theta}} \ell(\mathcal{D};\underline{\theta}) = \mu - \E_{\underline{\theta}} ( \cv{X} ),
\end{align}
whose second term is an expectation that has to be calculated, using the sum-product algorithm or with a Markov chain Monte Carlo method for instance.

Another question is whether we will be learning interesting graphs at all. Graph-learning algorithms tend to link variables that are not linked in the real
underlying graph. To avoid this, complicated graphs should be penalized by introducing a regularizer. Unfortunately, this is a highly non-trivial problem, and 
graphical model learning algorithms do not always perform well to this day.

 \subsection{Latent Graphical Model learning: the Expectation-maximization algorithm}
\index{Expectation maximization}
In this last case, we distinguish two different variables: 
\begin{itemize}
\item $Y$ stands for observed variables,
\item $X$ denotes the hidden variables.
\end{itemize}
The parameter ${\theta}$ is estimated from the observations, namely
\begin{align}
\hat{\theta}^{ML}=\argmax_{\theta} \log P_Y(y;\theta).
\end{align}
The log-likelihood is derived by marginalizing on the hidden variables
\begin{align}
\ell(y;\theta)
&= \log P_Y(y;\theta),\nonumber\\
&=\log \sum_x P_{X,Y} (x,y;\theta),\\
&=\log \sum_x q(x|y) \frac{P_{X,Y} (x,y;\theta)}{q(x|y)},\\
&= \log \E_{q} \left[\frac{P}{q}\right]\geq  \E_{q} \left[\frac{P}{q}\right] \triangleq \mathcal{L}(q;\theta).
\end{align}

\noindent
This gives raise to the Expectation-Maximisation (EM) algorithm \cite{DLR1977}.
\begin{algorithm}
\textbf{EM algorithm}\\
Until convergence, iterate between
 \begin{itemize}
\item \textbf{E-step}: estimation of the distribution $q$\\
$\theta^t\rightarrow q^{t+1}=\argmax_q \mathcal{L}(q;\theta^t)$.
\item \textbf{M-step}: estimation of the parameter $\theta$\\
$q^{t+1}\rightarrow \theta^{t+1}=\argmax_\theta \mathcal{L}(q^{t+1};\theta)$.
\end{itemize}
\end{algorithm}

\newpage
\bibliographystyle{plain}
\addcontentsline{toc}{section}{References}
\bibliography{references}

\begin{thebibliography}{1}

\bibitem{CL1968}
C.~K. Chow and C.~N. Liu.
\newblock Approximating discrete probability distributions with dependence
  trees.
\newblock {\em Information Theory, IEEE Transactions on}, 14(3):462--467, 1968.

\bibitem{DLR1977}
A.~P. Dempster, N.~M. Laird, and D.~Rubin.
\newblock Maximum likelihood from incomplete data via the em algorithm.
\newblock {\em Journal of the Royal Statistical Society, Series B},
  39(1):1--38, 1977.

\bibitem{Gri1973}
G.~R. Grimmet.
\newblock A theorem about random fields.
\newblock {\em Bulletin of the London Mathematical Society}, 5(1):81--84, 1973.

\bibitem{HC1971}
J.~M. Hammersley and P.~Clifford.
\newblock Markov fields on finite graphs and lattices.
\newblock Available online:
  \url{http://www.statslab.cam.ac.uk/~grg/books/hammfest/hamm-cliff.pdf}, 1971.

\end{thebibliography}

\end{document}